\titleformat{\section}{\large\bfseries}{\thesection.}{1em}{}
\titleformat{\subsection}{\normalsize\bfseries}{\thesubsection}{1em}{}
\title{\textbf{Beyond Memorization: Selective Learning for Copyright‑Safe Diffusion Model Training
}}
\author{\textbf{Divya Kothandaraman}}
\author{\textbf{Jaclyn Pytlarz}}
\affil{Dolby Laboratories}
\date{}
\newtheorem{corollary}{Corollary}[section]
\newtheorem{theorem}{Theorem}[section]
\begin{document}

\maketitle

\begin{abstract}

    Memorization in large-scale text-to-image diffusion models poses significant security and intellectual property risks, enabling adversarial attribute extraction and the unauthorized reproduction of sensitive or proprietary features. While conventional dememorization techniques, such as regularization and data filtering, limit overfitting to specific training examples, they fail to systematically prevent the internalization of prohibited concept-level features. Simply discarding all images containing a sensitive feature wastes invaluable training data, necessitating a method for selective learning at the concept level. 
    
    We introduce a gradient projection method designed to enforce a stringent requirement of concept-level feature exclusion. Our defense operates during backpropagation by systematically identifying and excising training signals aligned with embeddings of prohibited attributes. Specifically, we project each gradient update onto the orthogonal complement of the sensitive feature's embedding space, thereby zeroing out its influence on the model's weights. Our method integrates seamlessly into standard diffusion model training pipelines and complements existing defenses. We analyze our method against an adversary aiming for feature extraction. In extensive experiments, we demonstrate that our framework drastically reduces memorization while rigorously preserving generation quality and semantic fidelity. By reframing memorization control as selective learning, our approach establishes a new paradigm for IP-safe and privacy-preserving generative AI.
\end{abstract}
\section{Introduction}

Generative vision models inevitably learn~\cite{somepalli2023diffusion,carlini2023extracting} both valuable patterns and unwanted, often proprietary, attributes from their training data. When specific visual concepts, be it a person’s likeness, a trademarked logo, or a distinctive artistic style, must never be reproduced, simply removing every image that contains them is a blunt security instrument. This approach discards rich training data, significantly weakening the model’s capabilities and raising the cost of dataset curation. More critically, even if exact memorization of a training image is avoided, an adaptive adversary~\cite{ren2025reverse} can still manipulate seeds, noise inputs, or carefully crafted prompts to extract sensitive content or close approximations, revealing the model's internalized secrets.

This vulnerability to adversarial feature extraction demonstrates that conventional training-time defenses~\cite{ren2024unveiling} like regularization and data filtering, which aim to limit general overfitting, often leave residual feature traces. Similarly, inference-side techniques~\cite{gandikota2023erasing} like adversarial fine-tuning and unlearning, which operate as post-hoc methods, have been repeatedly shown in empirical studies to be vulnerable to concept recurrence, making it exceedingly difficult to reliably purge a learned concept from a model’s internal feature space.

What is fundamentally needed is selective learning: a robust, security-focused mechanism for models to ingest all of the beneficial content in an image while systematically blocking particular concept-level signals from ever taking root. Most prior work on memorization focuses on stopping the model from overfitting to exact training examples. Our formulation goes a critical step further: we propose a defense that blocks the model from encoding any identifiable attributes of protected content, the core features of a concept, not just the specific instance in the training set. We offer a novel perspective by reframing the problem as one of blocking concept acquisition rather than merely pruning memorized instances. This shift transforms memorization control into a provable, concept-level security constraint. By rigorously defining and enforcing this selective learning mechanism, we offer a contribution to the field of responsible and IP-safe generative modeling.

Before presenting our defense, we must first establish the security landscape: why does reducing general memorization not guarantee that models will abstain from reproducing restricted concepts? We begin by framing memorization in generative AI, identifying the training dynamics that drive models to internalize unwanted examples, and briefly surveying established techniques for curbing memorization~\cite{chen2024towards,somepalli2023understanding}, underscoring why they are insufficient against concept-level extraction attacks.

We then pivot to the core technical challenge: how can we enforce selective learning in text-to-image diffusion models? To tackle this, we present a novel method based on gradient projection. We demonstrate how projecting gradients away from directions tied to restricted concepts acts as an intrinsic safeguard, ensuring that the model never internalizes those undesirable attributes during the critical weight update step. This procedure integrates seamlessly into the existing diffusion training pipeline. Crucially, selective learning via gradient projection is not a substitute for standard memorization mitigation; it complements them by providing an explicit, targeted defense mechanism to block the acquisition and reproduction of restricted concepts. This layered approach ensures that while general overfitting is addressed with conventional methods, the model is also prevented from learning forbidden features through our precise gradient intervention.

Through extensive experiments, we demonstrate that selective learning sharply reduces the replication of targeted attributes while preserving generation fidelity and diversity. We validate the robustness of our method across multiple prohibited-concept scenarios and confirm its effectiveness against adversarial extraction attacks.

\subsection{Main Contributions}

\begin{itemize}
    \item We introduce the concept of Selective Learning, formally framing model training as a process that ingests beneficial patterns while blocking the acquisition of specified concept-level signals, thereby providing a stronger security guarantee than conventional unlearning.
    \item We propose a novel method based on Gradient Projection for text-to-image diffusion models that steers learning dynamics away from restricted-concept directions, ensuring those attributes are never internalized. By treating forbidden concepts as first-class constraints rather than post-hoc filters, we introduce a selective learning paradigm that ensures that text-to-image models generate only desired content.
    \item We show how to seamlessly integrate selective learning into standard diffusion-model training pipelines, complementing existing regularization and data-filtering defenses with a targeted, intrinsic mechanism.
    \item Through extensive experiments, we demonstrate that our approach sharply reduces replication of prohibited attributes without degrading generation fidelity or diversity. We validate the robustness of our method under adversarial extraction attacks, confirming the effectiveness of our gradient-projection–based selective learning method.
\end{itemize}

\section{Rethinking Memorization Limits for Selective Learning}

We posit that reducing generic memorization~\cite{ippolito2023preventing} by itself does not satisfy the requirements of concept-level selective learning. To understand this gap, we first analyze the mechanism and limitations of conventional memorization defenses, establishing why they fail to provide robust security guarantees against targeted concept extraction.

\subsection{Background: Memorization Dynamics and Attack Surface}

Memorization~\cite{arpit2017closer,schwarzschild2024rethinking,somepalli2023diffusion} refers to a model's tendency to reproduce training examples or their close variants, creating a critical attack surface where an adversary can exploit the model's overfitting to extract sensitive, proprietary, or private training data.

\subsubsection{Definition of Memorization in Generative Models}

Memorization arises when a model’s capacity and training regimen exceed what’s needed for generalization, causing high-capacity networks to act as instance-specific lookup tables. From a learning theory perspective, memorization corresponds to low generalization performance despite a low empirical risk. The empirical risk minimization objective function
\begin{equation}
  \hat R(f) \;=\; \frac{1}{N}\sum_{i=1}^N \ell\bigl(f(x_i), y_i\bigr)
\end{equation}
can be driven arbitrarily close to zero if $f$ has enough flexibility. Overly expressive models exhibit low bias but extreme variance, fitting idiosyncratic noise.

Diffusion models learn to reverse a gradual noising process by estimating the score function $\nabla_x \log p_t(x)$. The most common training loss is the denoising score matching objective:
\begin{equation}
  \mathcal{L}(\theta)
  = \mathbb{E}_{t,\,x_0,\,\epsilon}
    \Bigl[\bigl\|\epsilon - \epsilon_\theta(x_t, t)\bigr\|^2\Bigr],
\end{equation}
where the noisy sample $x_t$ is defined by
\begin{equation}
  x_t = \sqrt{\bar\alpha_t}\,x_0 \;+\; \sqrt{1 - \bar\alpha_t}\,\epsilon.
\end{equation}
\noindent If $\epsilon_\theta$ overfits, it learns a near instance-specific mapping from $x_t$ to the exact noise $\epsilon$. This effectively embeds training examples in the learned score field. During sampling, the iterative reverse‑diffusion process repeatedly queries this biased score estimate, causing trajectories to drift toward these memorized regions of the data manifold. In the absence of explicit privacy constraints, the model is therefore incentivized to ``remember'' training examples rather than learn a generalizable score function.

\subsection{Mitigating Memorization: Prior Work and Security Drawbacks}

Existing mitigation techniques, classified as training-time and inference-time strategies, fail to provide the necessary targeted, concept-level exclusion guarantee against an adaptive adversary.

\subsubsection{Training-Time Defenses: Failure Modes}

\paragraph{Data-Level Curation}
Techniques like exact and near-duplicate removal, controlled caption granularity, and data augmentation reduce the risk of rote reproduction \cite{chen2024towards,somepalli2023understanding}. However, adversaries~\cite{ma2024could,balle2022reconstructing,carlini2023extracting,ma2025jailbreaking, jegorova2022survey} can sidestep these defenses by crafting prompts that exploit subtle invariances or by combining noise schedules to find copyrighted patches.

\paragraph{Model-Level Interventions}
Strategies such as early stopping, per-sample loss filtering, and Differentially Private SGD (DP-SGD) each impede over-adaptation \cite{ren2024unveiling,chen2024exploring,chen2025enhancing,bonnaire2025diffusion,dockhorn2022differentially,zhu2018hidden}. In practice, early stopping~\cite{garnier2025early,favero2025bigger} only postpones memorization. Norm‐ and loss‐threshold detectors can be gamed by smoothing gradient signatures. Although DP‑SGD~\cite{nasr2021adversary} provides formal privacy guarantees, achieving a meaningful privacy budget often requires injecting substantial noise during training, which can significantly degrade model utility. Moreover, if the privacy parameters are set too loosely, the resulting model may still leak information that attackers can exploit, potentially by aggregating many model outputs, even though the DP guarantee itself does not weaken under such aggregation. Ambient diffusion \cite{daras2023ambient} reduces direct memorization but its lack of targeted control means identifiable features of copyrighted entities can persist in the latent space and later be recovered. Techniques like parameter-efficient fine-tuning and low-rank adapters (LoRA) \cite{hu2022lora,li2023loftq,han2024parameter,hayou2024lora+} and ensemble defenses \cite{liu2024iterative} cap the memorization budget. Despite these measures, optimization routines can still navigate memorization subspaces embedded in the weight manifold.

\subsubsection{Inference-Time Strategies}

Inference‑time, post‑hoc guardrails, such as prompt filtering, output pruning, heuristic perturbations, or watermarking, operate only at the interface level and leave the model weights untouched. Because the underlying generative distribution remains unchanged, adversarially crafted prompts can still navigate the model’s latent space and extract memorized or near‑verbatim copyrighted content \cite{ma2024inversion,kowalczuk2025finding,ma2024could,tsai2023ring}. Pruning~\cite{ni2025controllable} fails because memorized concepts are distributed across the network. Post-training fine-tuning and unlearning methods suffer from concept recurrence: once a concept is woven into the high-dimensional feature manifold, it is extremely difficult~\cite{zhang2024generate,suriyakumar2024unstable,george2025illusion} to surgically remove without collateral damage to generalization. Adversaries~\cite{george2025illusion} can exploit residual correlations to resurrect ``forgotten'' content.

\subsection{Mitigating Memorization Alone Cannot Guarantee Exclusion of Restricted Concepts}

\begin{theorem}[Total‐Variation Leakage]
Let $p_\theta$ be a learned distribution and $p_{\mathrm{data}}$ the training distribution. If
\[
\mathrm{TV}(p_\theta, p_{\mathrm{data}})
= \sup_{A\subset\mathcal X}
\bigl|\Pr_{x\sim p_\theta}[x\in A]
- \Pr_{x\sim p_{\mathrm{data}}}[x\in A]\bigr|
\le \delta,
\]
then for every measurable set $S\subset\mathcal X$,
\[
\Pr_{x\sim p_\theta}[x\in S]
\;\ge\;
\Pr_{x\sim p_{\mathrm{data}}}[x\in S] \;-\;\delta.
\]
\end{theorem}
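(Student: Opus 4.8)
The plan is to exploit directly the fact that the total-variation distance is defined as a supremum over \emph{all} measurable sets, so that the bound $\delta$ applies in particular to the single set $S$ of interest. No approximation, limiting argument, or structural property of $p_\theta$ or $p_{\mathrm{data}}$ is required beyond the definition itself; the entire content is that a supremum dominates each of its terms.

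First I would fix an arbitrary measurable set $S \subset \mathcal X$. Because $S$ is measurable, it lies in the index family over which the supremum defining $\mathrm{TV}(p_\theta, p_{\mathrm{data}})$ ranges. Hence the quantity $\bigl|\Pr_{x\sim p_\theta}[x\in S] - \Pr_{x\sim p_{\mathrm{data}}}[x\in S]\bigr|$ is one of the terms being maximized, and is therefore no larger than the supremum:
\[
\bigl|\Pr_{x\sim p_\theta}[x\in S] - \Pr_{x\sim p_{\mathrm{data}}}[x\in S]\bigr| \le \mathrm{TV}(p_\theta, p_{\mathrm{data}}) \le \delta.
\]

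Next I would discard the absolute value in favor of the single one-sided inequality the statement requires. An inequality $|a - b| \le \delta$ implies both $a - b \le \delta$ and $a - b \ge -\delta$; retaining the latter gives $\Pr_{x\sim p_\theta}[x\in S] - \Pr_{x\sim p_{\mathrm{data}}}[x\in S] \ge -\delta$, and rearranging yields $\Pr_{x\sim p_\theta}[x\in S] \ge \Pr_{x\sim p_{\mathrm{data}}}[x\in S] - \delta$. Since $S$ was arbitrary, the bound holds for every measurable $S$.

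There is essentially no technical obstacle here: the only point meriting care is confirming that the supremum in the definition ranges over exactly the class of sets (measurable subsets of $\mathcal X$) to which $S$ is assumed to belong, so that $S$ is a genuinely admissible test set. I would also record the interpretive consequence that motivates the one-sided phrasing: taking $S$ to be the event that $x$ exhibits the restricted concept shows a small global fidelity gap $\delta$ cannot by itself force the leakage probability $\Pr_{x\sim p_\theta}[x\in S]$ below $\Pr_{x\sim p_{\mathrm{data}}}[x\in S] - \delta$, which is precisely the exposure that conventional memorization control leaves unaddressed.
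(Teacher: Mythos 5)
Your proof is correct and follows exactly the paper's argument: specialize the supremum defining $\mathrm{TV}$ to the particular measurable set $S$, then drop the absolute value to obtain the one-sided bound $\Pr_{x\sim p_\theta}[x\in S]\ge\Pr_{x\sim p_{\mathrm{data}}}[x\in S]-\delta$. Your version simply spells out the elementary steps the paper compresses into one line.
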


\begin{proof}
By definition of total variation,
\[
\bigl|\Pr_{p_\theta}[S] - \Pr_{p_{\mathrm{data}}}[S]\bigr|
\;\le\;\mathrm{TV}(p_\theta, p_{\mathrm{data}})\le \delta,
\]
so $\Pr_{p_\theta}[S]\ge\Pr_{p_{\mathrm{data}}}[S]-\delta$.
\end{proof}

\begin{corollary}[Repeated‐Sampling Amplification]
If $\Pr_{x\sim p_{\mathrm{data}}}[x\in S]=\alpha>0$, then drawing $N$ i.i.d. samples $x_1,\dots,x_N\sim p_\theta$ yields
\[
\Pr\bigl[\exists\,i:\,x_i\in S\bigr]
=1 - \bigl(1 - \Pr_{x\sim p_\theta}[x\in S]\bigr)^N
\;\ge\;1 - (1 - \alpha + \delta)^N,
\]
which tends to 1 as $N\to\infty$ whenever $\alpha>\delta$.
\end{corollary}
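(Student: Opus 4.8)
The plan is to reduce the corollary to the Total-Variation Leakage theorem together with an elementary independence computation, so the argument amounts to two lines of probability followed by a monotonicity check. First I would introduce the shorthand $p := \Pr_{x\sim p_\theta}[x\in S]$ for the per-sample hit probability under the model. Because $x_1,\dots,x_N$ are drawn i.i.d.\ from $p_\theta$, the event that no sample lands in $S$ has probability that factorizes as a product of $N$ identical terms, namely $(1-p)^N$; taking complements yields the claimed equality
\[
\Pr\bigl[\exists\,i:\,x_i\in S\bigr] = 1 - (1-p)^N.
\]

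Next I would invoke the theorem to lower-bound $p$. Instantiating the measurable set in the theorem to be $S$ itself gives $p = \Pr_{p_\theta}[S] \ge \Pr_{p_{\mathrm{data}}}[S] - \delta = \alpha - \delta$, hence $1 - p \le 1 - \alpha + \delta$. To turn this into a bound on $(1-p)^N$ I must check that the map $t \mapsto t^N$ is being applied on an interval where it is monotone nondecreasing, i.e.\ that both sides are nonnegative. Since $p$ is a probability we have $1 - p \in [0,1]$, and since $\alpha \le 1$ we have $1 - \alpha + \delta \ge \delta \ge 0$; therefore $0 \le 1 - p \le 1 - \alpha + \delta$, so raising to the $N$-th power preserves the inequality, giving $(1-p)^N \le (1-\alpha+\delta)^N$ and thus $1 - (1-p)^N \ge 1 - (1-\alpha+\delta)^N$, which is the stated lower bound.

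Finally, for the asymptotic claim I would observe that the hypothesis $\alpha > \delta$ forces the base $1 - \alpha + \delta$ to lie strictly in $[0,1)$, so $(1-\alpha+\delta)^N \to 0$ geometrically as $N \to \infty$ and the lower bound tends to $1$; since the probability in question is itself bounded above by $1$, a squeeze gives convergence to $1$. The only genuine subtlety, and the one place I would be careful, is the nonnegativity and monotonicity bookkeeping in the second step: the inequality $1 - p \le 1 - \alpha + \delta$ only propagates through the $N$-th power because both bases are confined to $[0,1]$, and it is worth recording explicitly that $\alpha \le 1$ (so that $1 - \alpha + \delta \ge 0$) in order to license that step. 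Everything else is routine.
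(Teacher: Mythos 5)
Your proposal is correct and follows essentially the same route as the paper's own proof: independence gives $\Pr[\forall i:\,x_i\notin S]=(1-\Pr_{p_\theta}[S])^N$, and substituting the theorem's bound $\Pr_{p_\theta}[S]\ge\alpha-\delta$ yields the result. The only difference is that you make explicit the monotonicity and nonnegativity bookkeeping for the $N$-th power and the geometric decay when $\alpha>\delta$, which the paper leaves implicit.
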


\begin{proof}
Independence gives $\Pr[\forall i:\,x_i\notin S]
= \bigl(1 - \Pr_{p_\theta}[S]\bigr)^N$,
then substitute $\Pr_{p_\theta}[S]\ge\alpha-\delta$.
\end{proof}

\subsubsection*{Applications}

\paragraph{Copyrighted‐Content Extraction}
Even if the learned distribution of a diffusion model $p_\theta$ is arbitrarily close to the training distribution $p_{\mathrm{data}}$ in a statistical sense, an adversary can still extract copyrighted images. We make this precise with total‐variation bounds and repeated‐sampling arguments. Let $S$ be the set of copyrighted images with prevalence $\alpha=\Pr_{x\sim p_{\mathrm{data}}}[x\in S]$.
Even if $\delta$ is vanishingly small, repeated sampling from $p_\theta$ recovers copyrighted content with probability $\ge1-(1-\alpha+\delta)^N\to1$ whenever $\alpha>\delta$.

\paragraph{Semantic Identity Leakage}
Even when a diffusion model is regularized so that it does not reproduce any training image verbatim, it can still capture and regenerate the ``identity'' of a subject (e.g.\ Tom Cruise). We make this precise by showing that any small statistical distance to the true data distribution entails a nontrivial probability of generating samples recognized as that identity. Define an oracle classifier $C:\mathcal X\to\{0,1\}$ for a recognizable subject (e.g.\ Tom Cruise), and let
$\alpha=\Pr_{x\sim p_{\mathrm{data}}}[C(x)=1]$.
Setting $S=\{x:C(x)=1\}$ in the theorem yields
$\Pr_{x\sim p_\theta}[C(x)=1]\ge\alpha-\delta$,
and by the corollary, sampling $N$ times detects the identity with probability $\ge1-(1-\alpha+\delta)^N\to1$ as soon as $\alpha>\delta$.

\noindent\textbf{Discussion.} Preventing exact memorization does not block \emph{semantic} leakage: the model can still learn and reproduce the subject’s identity in novel contexts because the prevalence of the initial concept is non-zero and cannot be eliminated without destructive underfitting. Our method addresses this by enforcing concept acquisition blocking.

\section{Beyond Memorization: Gradient Projection for Selective Learning}

\begin{figure*}[h]
    \centering
    \includegraphics[width=1.0\linewidth]{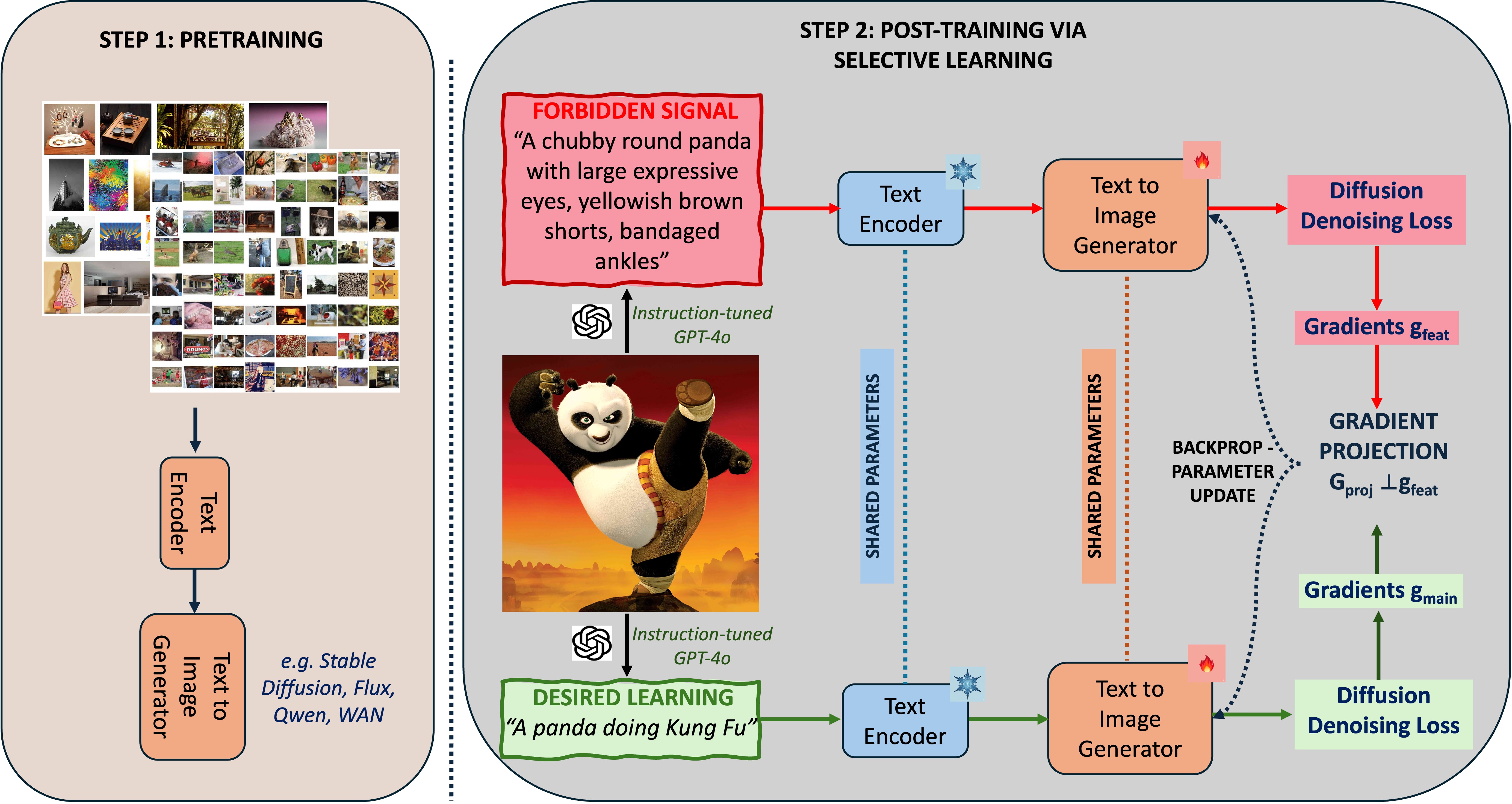}
    \caption{\textbf{Overview of the Gradient Projection Framework for Selective Learning.} This two-stage protocol employs (Step 1) Pretraining on general data, followed by (Step 2) Constrained Fine-Tuning on sensitive examples. At each update, the framework calculates the gradient for the Forbidden Signal ($g_{feat}$) and then orthogonally projects the Desired Learning gradient ($g_{main}$) onto its complement. This geometric constraint, $G_{proj} \perp g_{feat}$, ensures the model learns beneficial content while provably blocking the acquisition of restricted concept-level features, thereby enforcing intellectual property (IP) preservation during training.}
    \label{fig:overview}
\end{figure*}

The key mechanism in selective learning is to modify gradient updates so that training steers the model away from unwanted concepts. Unlearning methods~\cite{wu2025unlearning,ren2024unveiling} perform this adjustment in a post-hoc phase to erase specific knowledge, while memorization defenses~\cite{somepalli2023understanding, favero2025bigger} operate during training to prevent overfitting to individual samples. However, these approaches, including, noise injections, extra tokens, generic regularizers, are at best heuristic. They might obscure forbidden patterns, but they offer no formal guarantee that protected features won’t leak back in under adversarial prompts or clever inference hacks. Hence, there is a pressing need for a method that explicitly prevents diffusion models from learning sensitive motifs during training.

To address the challenge of enforcing IP-safe generative modeling, we propose a gradient projection-based method. Gradient projection~\cite{calamai1987projected} is a classical technique used in iterative optimization to enforce constraints by projecting the gradient onto a feasible subspace, effectively removing components that violate those constraints. This approach has been successfully applied across various domains~\cite{tian2023trainable,saha2021gradient,qiao2025gradient,hoang2024learn,huang2024unified,wang2021adversarial}, including fairness in machine learning, adversarial robustness by steering updates away from vulnerable directions, and privacy-focused applications such as machine unlearning.

Building on this foundation, we argue that gradient projection is well-suited for mitigating unauthorized feature acquisition in generative models. Its geometric enforcement proves resilient against adversarial manipulations, including prompt variation, noise schedule perturbations, and inference-time exploits, as excluded directions cannot re-enter the optimization process. Consequently, the model’s capacity for rote memorization becomes provably non-increasing, with each weight update explicitly excising forbidden directions.

Our method operates as follows. After computing the raw gradient update, we identify a forbidden feature subspace corresponding to copyrighted or otherwise sensitive motifs. The gradient is then orthogonally projected onto the complement of this subspace, effectively removing any aligned component. This approach adapts established projected-gradient techniques~\cite{calamai1987projected}, ensuring consistent reduction of memorization potential making our method a robust addition to any diffusion-model training pipeline. 

\subsection{Method: Enforcing Concept Exclusion via Orthogonal Projection}

To apply gradient projection in a text-conditioned diffusion model, we first isolate the visual features that pose an IP-leakage risk. Consider an image of Kung Fu Panda’s Po. Our objective is to let the model learn the general concept of ``a panda doing Kung Fu'' while preventing it from acquiring Po’s distinctive, protected likeness (e.g., rounded ear shape, yellowish-brown shorts, characteristic facial features).

Because the model is text-conditioned, we use the CLIP text encoder to define directions in concept space. We begin by selecting a primary caption $p_{\mathrm{main}}$ that captures the desired concept:

\[
p_{\mathrm{main}} = \text{``a panda doing Kung Fu.''}
\]

Using the denoising loss $\mathcal{L}$, we compute the corresponding training gradient:

\[
g_{\mathrm{main}}
=
\nabla_{\theta}\,\mathcal{L}\bigl(x,\;p_{\mathrm{main}};\theta\bigr),
\]

which, if left unmodified, would encourage the model to learn all features present in the training image, including Po’s copyrighted appearance.

To identify the directions responsible for memorizing protected attributes, we introduce an auxiliary prompt $p_{\mathrm{feat}}$ that isolates a specific copyrighted feature:

\[
p_{\mathrm{feat}} = \text{``a panda wearing yellowish-brown shorts.''}
\]

The gradient associated with this prompt,

\[
g_{\mathrm{feat}}
=
\nabla_{\theta}\,\mathcal{L}\bigl(x,\;p_{\mathrm{feat}};\theta\bigr),
\]

serves as the IP-leakage direction we aim to remove. We then project $g_{\mathrm{main}}$ onto the orthogonal complement of $g_{\mathrm{feat}}$. With $\varepsilon$ added for numerical stability, the orthogonalized gradient is

\[
g_{\perp}
=
g_{\mathrm{main}}
- \lambda\,
\frac{\langle g_{\mathrm{main}},\,g_{\mathrm{feat}}\rangle}
{\|g_{\mathrm{feat}}\|^2 + \varepsilon}
\,g_{\mathrm{feat}}.
\]

The hyperparameter $\lambda$ controls the strength of the projection. 
Under $\lambda=1$, the resulting $g_{\perp}$ contains no component aligned with Po’s protected attributes. To maintain consistent learning dynamics, we rescale the orthogonalized gradient to match the norm of the original update:

\[
g_{\mathrm{proj}}
=
\frac{\|g_{\mathrm{main}}\|}{\|g_{\perp}\|}\;g_{\perp}.
\]

Finally, we update the model parameters using this constrained gradient:

\[
\theta \leftarrow \theta - \eta\,g_{\mathrm{proj}},
\]

where $\eta$ is the learning rate. This ensures that each step reinforces the intended concept while provably preventing the model from increasing its capacity to reproduce copyrighted features.

\begin{figure*}
    \centering
    \includegraphics[width=0.99\linewidth]{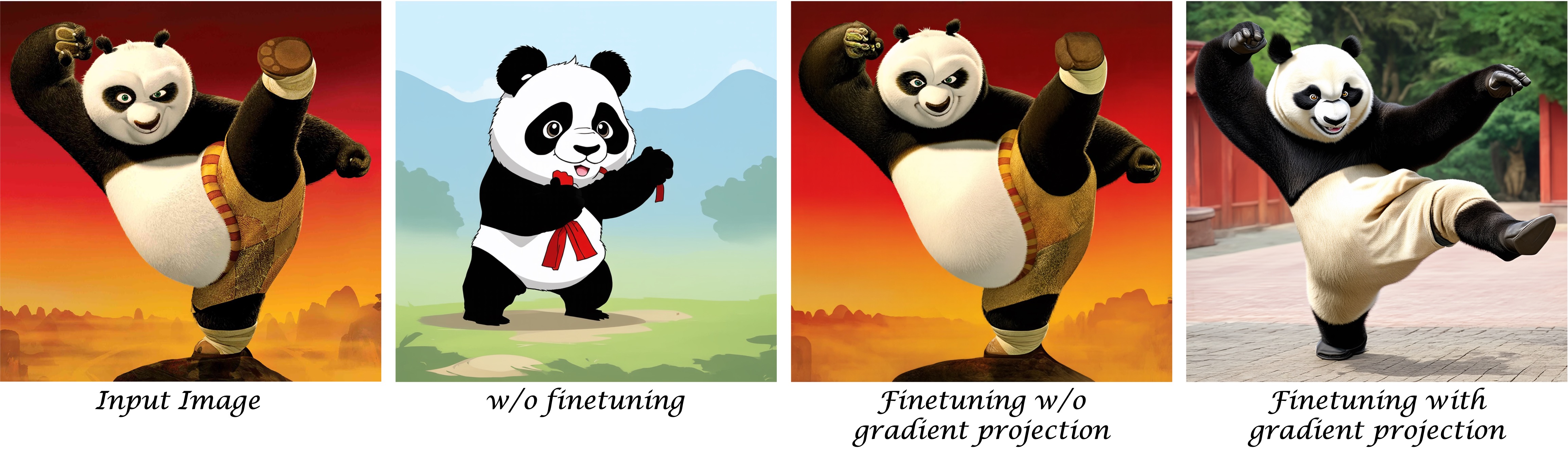}
    \caption{Generation results on the “Kung Fu Panda” single-image case study using the main prompt “a panda doing kung fu” and the forbidden prompt “A huge panda wearing yellow shorts with a red and yellow waistband, with small ears, bandaged ankles, distinct fur patterns.” (a) Before finetuning, the base model produces a generic kung fu panda. (b) After naïve LoRA finetuning, the model memorizes and faithfully reproduces the copyrighted visual details from the reference image. (c) With gradient-projection finetuning, the model captures only the abstract concept of a kung fu panda without emitting any of the original image’s sensitive attributes.}
    \label{fig:result1}
\end{figure*}
\subsection{Analysis: Formal Security Guarantees}

The following theorems formalize this guarantee: when $\lambda = 1$, the projected gradient is exactly orthogonal to the forbidden feature subspace, removing all first‑order learning signals along those directions in each constrained update.

\subsubsection{Zero First‐Order Improvement on Forbidden Features}

\begin{theorem}[Gradient Projection Prevents First-Order Learning]
\label{thm:gradient_projection}
Let $g_{\text{feat}} = \nabla_\theta \mathcal{L}(x, p_{\text{feat}}; \theta)$ be the gradient for a forbidden feature, and let $g_{\text{proj}}$ be the projected gradient orthogonal to $g_{\text{feat}}$. Then:
$$\langle g_{\text{proj}}, g_{\text{feat}} \rangle = 0$$
This ensures zero first-order improvement in the forbidden feature direction.
\end{theorem}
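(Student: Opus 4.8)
The plan is to directly compute the inner product $\langle g_{\mathrm{proj}}, g_{\mathrm{feat}} \rangle$ by unwinding the two-stage construction of the projected gradient. Since the theorem is stated under $\lambda = 1$, I would first establish orthogonality for the intermediate vector $g_{\perp}$ (before rescaling), and then argue that the subsequent norm-matching rescaling, being multiplication by a positive scalar, cannot disturb orthogonality.

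First I would substitute the definition of $g_{\perp}$ (in the idealized regime $\lambda = 1$, $\varepsilon = 0$) into the inner product against $g_{\mathrm{feat}}$:
\[
\langle g_{\perp}, g_{\mathrm{feat}} \rangle
= \Bigl\langle g_{\mathrm{main}} - \frac{\langle g_{\mathrm{main}}, g_{\mathrm{feat}} \rangle}{\|g_{\mathrm{feat}}\|^2}\, g_{\mathrm{feat}},\; g_{\mathrm{feat}} \Bigr\rangle.
\]
Using bilinearity of the inner product and the identity $\langle g_{\mathrm{feat}}, g_{\mathrm{feat}} \rangle = \|g_{\mathrm{feat}}\|^2$, the subtracted term contributes exactly $\langle g_{\mathrm{main}}, g_{\mathrm{feat}} \rangle$, which cancels the leading term and yields $\langle g_{\perp}, g_{\mathrm{feat}} \rangle = 0$. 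This is precisely the classical Gram--Schmidt step: removing the component of $g_{\mathrm{main}}$ along $g_{\mathrm{feat}}$ leaves a residual orthogonal to $g_{\mathrm{feat}}$.

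The second step lifts this orthogonality through the rescaling $g_{\mathrm{proj}} = (\|g_{\mathrm{main}}\|/\|g_{\perp}\|)\, g_{\perp}$. Because the prefactor is a nonnegative real scalar, homogeneity of the inner product gives $\langle g_{\mathrm{proj}}, g_{\mathrm{feat}} \rangle = (\|g_{\mathrm{main}}\|/\|g_{\perp}\|)\,\langle g_{\perp}, g_{\mathrm{feat}} \rangle = 0$. The only well-definedness caveat is that the rescaling presumes $g_{\perp} \neq 0$, i.e. $g_{\mathrm{main}}$ is not collinear with $g_{\mathrm{feat}}$; in the degenerate collinear case $g_{\perp} = 0$ and the orthogonality $\langle g_{\mathrm{proj}}, g_{\mathrm{feat}} \rangle = 0$ holds trivially, so I would dispose of that case at the outset.

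I expect the main obstacle to be conceptual bookkeeping rather than mathematical depth: I must be explicit that the claimed \emph{exact} orthogonality is a property of the theoretical construction with $\varepsilon = 0$, since with the numerical stabilizer $\varepsilon > 0$ the cancellation is only approximate and a residual of order $\varepsilon$ survives. I would therefore state the exact identity for the idealized projection and remark separately that the stabilized update deviates from perfect orthogonality by a controllable $O(\varepsilon)$ term, consistent with the paper's framing of $\varepsilon$ as a small stability constant.
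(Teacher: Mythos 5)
Your proposal is correct and follows essentially the same route as the paper's proof: expand $\langle g_{\perp}, g_{\mathrm{feat}}\rangle$ by bilinearity, observe the Gram--Schmidt cancellation, and conclude orthogonality (the paper then adds the directional-derivative remark to justify the ``zero first-order improvement'' phrasing). Your treatment is in fact slightly more careful than the paper's, which silently identifies $g_{\mathrm{proj}}$ with the unrescaled residual and ignores both the norm-matching rescaling step and the degenerate collinear case, whereas you handle both explicitly along with the $O(\varepsilon)$ caveat.
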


\begin{proof}
By construction of the gradient projection (ignoring $\varepsilon$ for exact proof):
$$g_{\text{proj}} = g_{\text{main}} - \frac{\langle g_{\text{main}}, g_{\text{feat}} \rangle}{\|g_{\text{feat}}\|^2} g_{\text{feat}}$$
Computing the inner product with $g_{\text{feat}}$:
\begin{align*}
\langle g_{\text{proj}}, g_{\text{feat}} \rangle &= \left\langle g_{\text{main}} - \frac{\langle g_{\text{main}}, g_{\text{feat}} \rangle}{\|g_{\text{feat}}\|^2} g_{\text{feat}}, g_{\text{feat}} \right\rangle \\
&= \langle g_{\text{main}}, g_{\text{feat}} \rangle - \frac{\langle g_{\text{main}}, g_{\text{feat}} \rangle}{\|g_{\text{feat}}\|^2} \langle g_{\text{feat}}, g_{\text{feat}} \rangle \\
&= \langle g_{\text{main}}, g_{\text{feat}} \rangle - \frac{\langle g_{\text{main}}, g_{\text{feat}} \rangle}{\|g_{\text{feat}}\|^2} \|g_{\text{feat}}\|^2 \\
&= \langle g_{\text{main}}, g_{\text{feat}} \rangle - \langle g_{\text{main}}, g_{\text{feat}} \rangle = 0
\end{align*}
Therefore, the directional derivative of the forbidden feature loss in the direction of the update $\Delta\theta = -\eta g_{\text{proj}}$ is:
$$\frac{d}{d\eta} \mathcal{L}_{\text{feat}}(\theta - \eta g_{\text{proj}})\Big|_{\eta=0} = -\langle \nabla_\theta \mathcal{L}_{\text{feat}}, g_{\text{proj}} \rangle = -\langle g_{\text{feat}}, g_{\text{proj}} \rangle = 0$$
\end{proof}

The first-order constraint in Theorem 3.1 is applied iteratively across all $T$ updates. By enforcing $\langle g_{proj}, g_{feat} \rangle = 0$ at each step, we ensure the parameter state $\theta_T$ remains within the orthogonal complement of the forbidden subspace $S_f$ relative to the initialization $\theta_0$.

\subsubsection{Invariance of the Forbidden‐Subspace Component}

Let $S = \mathrm{span}\{g_{\mathrm{feat}}\}$ denote the memorization subspace, and define the projector onto $S$ as:
\[
P_S \;=\; \frac{g_{\mathrm{feat}}\,g_{\mathrm{feat}}^{T}}{\|g_{\mathrm{feat}}\|^2}.
\]
Since the update direction $\Delta\theta = -\eta g_{\mathrm{proj}}$ lies in the orthogonal complement of $S$, we have $P_S\,\Delta\theta = 0$. Consequently:
\[
P_S\,\theta_{t+1}
=
P_S\bigl(\theta_t + \Delta\theta\bigr)
=
P_S\,\theta_t.
\]
Thus, the component of the model’s parameters along the forbidden direction remains \emph{constant} across every update.

\subsubsection{Bounded Memorization Capacity: Geometric Interpretation}

In diffusion training, the risk of reproducing a specific feature can be approximated by the projection of the parameter vector onto the feature's gradient direction. Under a first-order (NTK) approximation, we define the memorization capacity $M_f(\theta)$ of a forbidden feature $f$ as

\[
M_f(\theta) \;=\; \|P_{S_f}\theta\|^2,
\qquad
S_f = \mathrm{span}\{g_{\mathrm{feat}}\},
\]

where $P_{S_f}$ denotes the orthogonal projector onto the forbidden feature subspace.

Geometrically, $M_f(\theta)$ measures the alignment of the parameter state with the feature's Fisher-sensitive direction. In the Neural Tangent Kernel (NTK) view~\cite{jacot2018neural,tishby2015deep}, this alignment quantifies the model's directional sensitivity to the protected attribute.

By constraining each update $\Delta\theta$ to lie in $S_f^{\perp}$ (i.e., $P_{S_f}\,\Delta\theta = 0$), we ensure that the projection onto $S_f$ does not increase:

\[
P_{S_f}(\theta_{t+1}) \;=\; P_{S_f}(\theta_t).
\]

Thus, the model’s first-order capacity to encode or reconstruct the forbidden feature remains bounded by its initial value. This provides a structural safeguard against feature memorization that is independent of standard overfitting metrics.

\begin{theorem}[Memorization Capacity Bounds]
\label{thm:capacity_bounds}
Let $f$ be a forbidden feature represented by a subspace $S_f = \text{span}\{g_{\text{feat}}\}$, and let $\Pi_f$ denote the orthogonal projection onto $S_f$. Define the memorization capacity of feature $f$ at parameter state $\theta$ as:
$$M_f(\theta) = \|\Pi_f \theta\|^2$$
Then, under gradient updates constrained to be orthogonal to $S_f$, we have:
$$M_f(\theta_{t+1}) \leq M_f(\theta_t)$$
with equality when the projection is computed exactly.
\end{theorem}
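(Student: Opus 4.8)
The plan is to collapse the statement to a one-dimensional computation and then import the invariance identity derived just above this theorem. Since $S_f = \mathrm{span}\{g_{\text{feat}}\}$ is one-dimensional, I would set $u = g_{\text{feat}}/\|g_{\text{feat}}\|$, so that the projector is $\Pi_f = uu^{T}$ and the capacity reduces to the scalar $M_f(\theta) = \langle \theta, u\rangle^2$. The update being additive, $\theta_{t+1} = \theta_t - \eta\,g_{\text{proj}}$, linearity of $\Pi_f$ gives $\Pi_f\theta_{t+1} = \Pi_f\theta_t - \eta\,\Pi_f g_{\text{proj}}$, so everything is governed by the single scalar $\langle g_{\text{proj}}, u\rangle$.

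First I would dispatch the equality clause, i.e.\ the exact projection with $\lambda = 1$ and $\varepsilon = 0$. Here Theorem~\ref{thm:gradient_projection} already gives $\langle g_{\text{proj}}, g_{\text{feat}}\rangle = 0$, hence $\Pi_f g_{\text{proj}} = 0$ and $\Pi_f \theta_{t+1} = \Pi_f \theta_t$. Taking squared norms yields $M_f(\theta_{t+1}) = M_f(\theta_t)$. This is the core content and is essentially a repackaging of the forbidden-subspace invariance $P_S\theta_{t+1} = P_S\theta_t$ recorded immediately before the theorem; the only new step is squaring.

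Next I would turn to the inequality, which is meant to cover the practical regularized update with $\varepsilon > 0$. Writing $a = \langle \theta_t, u\rangle$ and $b = -\eta\langle g_{\text{proj}}, u\rangle$, a direct expansion gives $M_f(\theta_{t+1}) - M_f(\theta_t) = (a+b)^2 - a^2 = b\,(2a + b)$, and substituting the regularized coefficient into $\langle g_{\text{proj}}, u\rangle$ shows that the residual $b$ is of order $\varepsilon$, vanishing precisely when $\varepsilon = 0$ and thereby recovering the equality case.

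The main obstacle is exactly this last step: the sign of $b\,(2a+b)$ is not fixed by $|b| = O(\varepsilon)$ alone, so $M_f(\theta_{t+1}) \le M_f(\theta_t)$ does not follow unconditionally once $\varepsilon > 0$, since the residual update can nudge the forbidden component either way depending on the sign of $\langle g_{\text{main}}, g_{\text{feat}}\rangle$ relative to $a$. I would therefore present the equality as the exact, unconditional conclusion and the inequality as a first-order statement valid in the stable small-step, small-$\varepsilon$ regime in which the method is actually run, where $|b| \ll |a|$ and the sign of the retained component forces $b\,(2a+b) \le 0$. Making that operating regime precise, rather than absorbing it into an $O(\varepsilon)$ remainder, is the delicate part of the argument and the point at which an honest proof must add a hypothesis that the bare statement omits.
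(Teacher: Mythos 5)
Your argument for the equality case is exactly the paper's: both proofs use linearity of $\Pi_f$ together with $\Pi_f g_{\mathrm{proj}} = 0$ (which is Theorem~\ref{thm:gradient_projection} restated) to conclude $\Pi_f\theta_{t+1} = \Pi_f\theta_t$ and hence $M_f(\theta_{t+1}) = M_f(\theta_t)$; your reduction to the scalar $\langle\theta,u\rangle^2$ is only a cosmetic repackaging of that. Where you go beyond the paper is in the treatment of the inequality, and your criticism there is correct. The paper's proof never establishes $M_f(\theta_{t+1}) \le M_f(\theta_t)$ in the inexact case: it proves equality under exact projection and then closes with the remark that in practice one observes $M_f(\theta_{t+1}) \le M_f(\theta_t) + O(\epsilon_{\mathrm{num}})$, which is a bounded-increase statement, not the non-increase claimed in the theorem. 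Your expansion $M_f(\theta_{t+1}) - M_f(\theta_t) = b(2a+b)$ with $a = \langle\theta_t,u\rangle$ and $b = -\eta\langle g_{\mathrm{proj}}, u\rangle$ pinpoints why: when $\varepsilon>0$ (or $\lambda\neq 1$) the residual component $b$ is small but of uncontrolled sign relative to $a$, so the forbidden component can grow as well as shrink. The extra hypothesis you propose (an operating regime in which the sign of the retained component forces $b(2a+b)\le 0$) is precisely what the theorem statement omits and the paper's proof silently elides. In short: same proof for the substantive equality claim, plus a legitimate identification of a gap in the inequality that the paper itself does not close.
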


\begin{proof}
Assume the model parameters are updated via $\theta_{t+1} = \theta_t - \eta g_{\text{proj}}$, where $g_{\text{proj}} \perp S_f$.
Since $\Pi_f$ projects onto $S_f$, and $g_{\text{proj}}$ is orthogonal to $S_f$:
$$\Pi_f g_{\text{proj}} = 0$$
Applying the projection to the updated parameters:
\begin{align*}
\Pi_f \theta_{t+1} &= \Pi_f(\theta_t - \eta g_{\text{proj}}) \\
&= \Pi_f \theta_t - \eta \Pi_f g_{\text{proj}} \\
&= \Pi_f \theta_t
\end{align*}
Therefore, the memorization capacity remains unchanged:
$$M_f(\theta_{t+1}) = \|\Pi_f \theta_{t+1}\|^2 = \|\Pi_f \theta_t\|^2 = M_f(\theta_t)$$
In practice, due to numerical imprecision and $\varepsilon$ stabilization, we observe $M_f(\theta_{t+1}) \leq M_f(\theta_t) + O(\epsilon_{\text{num}})$.
\end{proof}

\paragraph{Interpretation.} This result shows that by controlling the direction of gradient updates, we prevent the model from increasing its alignment with sensitive features, ensuring the memorization capacity is frozen at its initial value, hence bounding the reproduction risk.

\subsubsection{Robustness Against Adversarial Prompts}

Any adversarial prompt that attempts to elicit the forbidden feature will generate a corresponding gradient $g'_{\mathrm{feat}}\in S$. The same projection step strips out its component, ensuring no new forbidden‐feature information can ever reenter training.

\paragraph{Conclusion.} By enforcing $g_{\mathrm{proj}}\;\perp\;g_{\mathrm{feat}}$, we achieve (i) zero first‐order improvement on forbidden features, (ii) invariance of forbidden‐subspace components, (iii) a hard cap on memorization capacity, and (iv) resilience to adversarial prompts. This delivers a formal, provable safeguard against IP leakage that heuristic methods cannot match.

\subsection{Training Protocol: A Divide-and-Conquer Strategy}

We recommend a two‐stage regimen for training text-based vision generation models with robust copyright protection.

\begin{enumerate}
    \item \textbf{Stage 1: General Pretraining.} Partition the dataset into non‐sensitive and copyright‐sensitive subsets. Pretrain the model in the usual way on the non‐sensitive data. No gradient projection is needed here, as the goal is to learn general visual and semantic patterns.
    \item \textbf{Stage 2: Constrained Fine-Tuning.} Once pretraining is complete, conduct a fine-tuning pass on the copyright-sensitive images (using either the full model or LoRA layers). Crucially, apply gradient projection exclusively on the copyright-sensitive images to suppress any update components aligned with protected features.
\end{enumerate}

This divide‐and‐conquer strategy is essential because gradient projection relies on meaningful feature gradients for the attributes you wish to suppress. An untrained model cannot yet recognize sensitive attributes, so computing a feature gradient at random initialization would be uninformative. After general pretraining, the model has internalized broad visual concepts and can generate targeted gradients for those sensitive attributes, making the subsequent projection both effective and precise.

As a word of caution, this copyright‐preserving protocol should \textbf{complement, not replace} standard memorization safeguards such as data de-duplication, caption specificity controls, regularization and other techniques. Those measures remain crucial for preventing generic overfitting and memorization in diffusion training. The gradient‐projection step is intended as an additional module that explicitly enforces non-memorization of the sensitive content.

\section{Experimental Analysis}

\begin{table*}
    \centering
    \resizebox{\textwidth}{!}{
    \begin{tabular}{ccccccccc}
        \midrule
        Data & Parameters (M) & Data  & Model  & Gradient Projection & SSCD ($\downarrow$) & CLIP ($\uparrow$) & KID ($\downarrow$) \\
         &  & ~\cite{somepalli2023understanding} & ~\cite{hu2022lora} &  (Ours) & &  &  \\
        \midrule 
        \midrule
         \multirow{2}{*}{Memorization test 1: Single-image overfitting} & $1.7$ & $\checkmark$ & $\times$ & $\times$ & $0.7298$	& $30.1774$ & $0.0046$\\
          & $1.7$ & $\checkmark$ & $\times$ & $\checkmark$ & $0.6656$ & $30.1021$ & $0.0007$\\
         \midrule
         \multirow{2}{*}{Memorization test 2: 80 samples} & $1.7$ & $\checkmark$ & $\checkmark$ & $\times$  & $0.3431$ &	$34.2076$ & $-0.0013$ \\
          & $1.7$ &  $\checkmark$ & $\checkmark$ & $\checkmark$ & $0.3315$ & $34.6897$ & $-0.0002$\\
         \midrule
         \multirow{2}{*}{Memorization test 3: 80 samples} & $54.4$ &  $\checkmark$ & $\checkmark$ & $\times$ & $0.6054$ &	$32.9715$ & $-0.0025$\\
          & $54.4$ &  $\checkmark$ & $\checkmark$ & $\checkmark$ & $0.497$ &	$33.3694$ & $-0.0032$\\
         \midrule
         \multirow{2}{*}{Memorization test 4: 395 samples} & $54.4$  & $\checkmark$ & $\checkmark$ & $\times$  & $0.5918$ &	$32.332$ & $0.0022$ \\
          & $54.4$ &  $\checkmark$ & $\checkmark$ & $\checkmark$ & $0.3091$ & $32.876$ & $0.0076$ \\
         \midrule
         \multirow{2}{*}{Memorization test 5: 2413 samples} & $108.8$ & $\checkmark$ & $\checkmark$ & $\times$  & $0.6205$ &	$31.4855$ & $0.0016$ \\
          & $108.8$ & $\checkmark$ & $\checkmark$ & $\checkmark$ & $0.2241$ & $33.2167$ & $0.00099$\\     
         \midrule 
    \end{tabular}
    }
    \caption{\textbf{Gradient projection reduces copyright similarity while preserving semantic content and visual quality.} Comparison of data- and model-based memorization mitigation methods with/without our gradient projection approach across multiple evaluation metrics reveals that our method consistently achieves lower SSCD scores, indicating reduced copyright infringement risk, while maintaining comparable CLIP similarity scores. Notably, KID remains below the 0.01 threshold across all tests, with and without gradient projection, indicating 
that our method preserves high-fidelity generation and model utility.}
    \label{tab:results1}
    \end{table*}

\subsection{Experimental Settings and Data Curation}

We curate a specialized dataset from the OpenVid~\cite{nan2024openvid} video corpus to create training samples prone to high-risk IP leakage. Each example pairs a frame with two complementary captions: one to reinforce desired, abstract content and one to suppress forbidden, copyright-sensitive features.

\paragraph{Data Preparation}
We apply the following procedure to select images with a high risk of copyright infringement or privacy leakage:
\begin{itemize}
    \item \textbf{Keyword Filter (IP/Privacy Risk):} We select videos whose captions include \texttt{"animated"} (indicating stylized content subject to copyright) or \texttt{"person"} (flagging potential privacy and rights concerns around faces).
    \item \textbf{Frame Extraction:} From each filtered video, we extract the exact middle frame to serve as the canonical training sample.
\end{itemize}

Following prior work \cite{webster2023reproducible}, we select a total of $2413$ images identified as highly memorized and copyrighted for our evaluation. This dataset size aligns with targeted memorization studies~\cite{ren2024unveiling,kowalczuk2025finding}, enabling computationally tractable evaluation.

\paragraph{Prompt Generation: Defining the Forbidden Subspace}
A potential concern in selective learning is the manual overhead required to define forbidden prompts ($p_{feat}$) for diverse datasets. Our framework addresses this through an automated, LLM-driven pipeline that disentangles protected attributes from abstract scene elements at scale. Given a predefined set of IP-safety rules or a high-level description of protected entities, an instruction-tuned LLM (e.g., GPT-OSS 20B) systematically analyzes the original metadata to generate pairs of $p_{main}$ and $p_{feat}$. This procedure eliminates the need for human-in-the-loop annotation, allowing the Gradient Projection framework to be integrated into massive-scale diffusion training pipelines without a linear increase in manual labor. We use the following instruction with the original video caption (\verb|{original_caption}|):

\begin{verbatim}
Analyze the following caption and identify 
entities that would be copyright-sensitive 
in the corresponding image:
{original_caption}
1. Write a new prompt without these 
copyright-sensitive entities (<= 77 tokens).
2. Write a prompt using only the copyright-
sensitive entities (<= 77 tokens).
\end{verbatim}

\begin{itemize}
    \item Output (1) becomes the \textbf{Main Caption ($p_{\mathrm{main}}$)} guiding positive concept learning.
    \item Output (2) becomes the \textbf{Forbidden Caption ($p_{\mathrm{feat}}$)} specifying the sensitive entities to suppress via gradient projection.
\end{itemize}

\paragraph{Example Pairing}
The example pairing (Main Caption: ``A vibrant cartoon-style interior, wooden floor, tall bookshelf, wooden table with teapot, striped chair, large golden vase on ornate walls, light streaming with whimsical décor, colorful palette, dynamic lighting.''; Forbidden Caption: ``purple animated character wearing white hat, blue scarf, holding sword, looking down, in a room with wooden floor, bookshelf full of books, wooden table with teapot and bowl, striped pattern chair, framed pictures on walls, large golden vase, cartoonish colorful style.'') demonstrates how we separate abstract scene elements from specific, high-risk character attributes.

\paragraph{Evaluation Metrics}
Following prior work~\cite{somepalli2023understanding, hintersdorf2024finding}, we use three complementary metrics to assess our method while maintaining generation quality:
\begin{itemize}
    \item \textbf{SSCD (Self-Supervised Copy Detection)~\cite{pizzi2022self}:} Our primary metric for copyright preservation, SSCD employs a specialized neural network to detect copied or near-duplicate content across visual transformations. Lower SSCD scores indicate successful prevention of unauthorized feature reproduction and stronger defense against potential IP infringement.
    \item \textbf{CLIP Similarity:} Measures \textbf{Semantic Preservation} between generated images and text prompts. Higher CLIP scores ensure that our aggressive IP protection does not compromise the model's ability to follow the general $p_{\mathrm{main}}$ textual instructions, validating utility preservation goals.
    \item \textbf{Kernel Inception Distance (KID)~\cite{binkowski2018demystifying}:} To evaluate the quality and realism of the generated images, we report the Kernel Inception Distance. KID measures the squared Maximum Mean Discrepancy between Inception representations of the generated and real data distributions using a polynomial kernel. This metric is particularly suited for our study as it is unbiased and more reliable when evaluating small subsets of data. Lower KID scores (below $0.01$) indicate that the generated outputs more closely match the distribution of the training frames in terms of visual quality. It is important to note that due to statistical variance, small negative scores (e.g., $\approx -0.005$) are considered normal and functionally equivalent to a score of zero, indicating that the generated distribution is indistinguishable from the reference data.
\end{itemize}

\paragraph{Backbone and Training Protocol}
We initiate our two-stage training protocol using the large-scale pretrained DiT model, Stable Diffusion 3.5M~\cite{esser2024scaling}, as our foundation model. This backbone provides the necessary general visual and semantic understanding learned during its initial massive pretraining phase. In the first stage, we leverage this existing capacity, reserving the subsequent fine-tuning step for the concept-level constraint enforcement. 

While Theorem 3.1 establishes $\lambda = 1.0$ as the theoretical optimum for perfect orthogonal excising, we observed that such aggressive updates lead to training instability in high-dimensional diffusion manifolds. Under practical constraints, $\lambda = 0.1$ works best to ensure gradient stability and prevent the denoising loss from diverging, which frequently occurs at higher values. Values lower than $0.1$ significantly diminish the IP-protection effect. The selection of $\lambda = 0.1$ was determined empirically to balance the theoretical guarantees of concept exclusion with the practical requirements of diffusion training stability. 

\paragraph{Evaluation Protocol: Ground-Truth as Adversarial Benchmark}
Our primary evaluation protocol~\cite{somepalli2023diffusion, carlini2023extracting} employs the ground-truth trigger methodology. For every training sample, we query the model using the exact $p_{main}$ utilized during the fine-tuning phase. In the context of diffusion models prone to overfitting, the original training caption acts as the most potent ``key'' or optimal trigger to recover exact training pixels from the weight manifold.

By reporting results under this regime, our metrics do not merely reflect generic performance; they represent an \textbf{intrinsic adversarial analysis} of the most extreme memorization scenario. This protocol ensures that our defense is tested against the maximum possible exposure, providing a more stringent security guarantee than randomized or automated adversarial searches which often converge on less effective local minima.

\begin{table*}
    \centering
    \resizebox{\textwidth}{!}{
    \begin{tabular}{ccccccccc}
        \midrule
        Data & Parameters & Data ~\cite{somepalli2023understanding} & Model ~\cite{hu2022lora} & Gradient Projection (Ours) & SSCD ($\downarrow$) & CLIP ($\uparrow$) & KID ($\downarrow$) \\
        \midrule
         \multirow{2}{*}{Memorization test 2: 80 samples} & $1.7$ & $\checkmark$ & $\checkmark$ & $\times$  & $0.2851$ &	$34.8471$ & $0.0014$\\
          & $1.7$ &  $\checkmark$ & $\checkmark$ & $\checkmark$ & $0.2039$ & $33.0355$ & $0.0269$\\
         \midrule
         \multirow{2}{*}{Memorization test 3: 80 samples} & $54.4$ & $\checkmark$ & $\checkmark$ & $\times$ & $0.5028$ &	$33.2463$ & $-0.0012$ \\
          & $54.4$ &  $\checkmark$ & $\checkmark$ & $\checkmark$ & $0.3941$ & $33.4471$ & $0.0009$ \\
         \midrule
         \multirow{2}{*}{Memorization test 4: 395 samples} & $54.4$ & $\checkmark$ & $\checkmark$ & $\times$  & $0.4648$ &	$32.6624$ & $0.004$ \\
          & $54.4$ & $\checkmark$ & $\checkmark$ & $\checkmark$ & $0.2454$ & $32.0518$ & $0.0137$ \\
         \midrule
    \end{tabular}
    }
    
    \caption{\textbf{Gradient Projection Performance under Adversarial Attacks.} Comparison of SSCD and CLIP scores under targeted prompt embedding optimization. Our method consistently achieves lower SSCD scores, demonstrating robust defense against adversarial extraction. Note: Absolute SSCD values are lower than those in Table 1 because the ground-truth $p_{main}$ caption acts as the most effective trigger for memorized trajectories in overfitted regimes \cite{somepalli2023understanding}. Adversarial search, while targeted, often converges to local minima in the high-dimensional embedding space that are less effective at recovering verbatim training pixels than the original training prompt.}
    \label{tab:adversarial1}
\end{table*}

\subsubsection{Memorization Test 1: Extreme Single-Image Overfitting}

We begin by injecting Low-Rank Adaptation (LoRA) layers of rank 4 into the Stable Diffusion 3.5M backbone. Fine-tuning is performed on a \textbf{single reference image} to establish an extreme, worst-case overfitting regime.

\paragraph{Comparative Setup}
\begin{enumerate}
    \item \textbf{Naïve Finetuning:} This serves as the Adversarial Baseline. Direct fine-tuning is guaranteed to result in near-$100\%$ memorization, faithfully reproducing every pixel detail of the training image. We finetuning by inserting LoRA layers of rank $4$, resulting in $1.7$ M trainable parameters.
    \item \textbf{Gradient-Projection Finetuning (Our Method):} We apply our gradient projection method on that same single image, explicitly projecting out gradients aligned with the sensitive features defined by $p_{\mathrm{feat}}$.
\end{enumerate}

\paragraph{Rationale for Single-Image Regime:} Training on one image eliminates confounding factors, such as large dataset regularization or augmentations, that might otherwise mitigate overfitting. This extreme setup ensures that any failure to memorize sensitive features is attributable solely to the fundamental geometric constraints imposed by our gradient projection method.

\paragraph{Case Study: Kung Fu Panda}
For our single-image case study, we use the following prompts:
\[
p_{\mathrm{main}}
= \text{``a panda doing kung fu''}
\]
\[
\resizebox{\linewidth}{!}{$
p_{\mathrm{feat}}
 = \text{``A huge panda wearing yellow shorts with a red and yellow waistband, with small ears, bandaged ankles, distinct fur patterns''}
 $}
\]
$p_{\mathrm{main}}$ guides desired learning, while $p_{\mathrm{feat}}$ isolates the copyrighted attributes to be suppressed. Qualitative results are shown in Figure~\ref{fig:result1}.
\begin{itemize}
    \item \textbf{Naive Finetuning (Adversary):} The model perfectly recreates copyrighted visual details (small rounded ears, red‐and‐yellow waistband, etc.).
    \item \textbf{Gradient‐Projection Finetuning (Defense):} The model learns only the abstract notion of a panda performing kung fu, without reproducing any of the original image’s copyrighted attributes.
\end{itemize}

\paragraph{Quantitative Results (Subset Analysis)}
We perform the above single-image overfitting experiments on a subset of $80$ images (each trained for 8000 steps, batch size 10 inducing data-level dememorization) and report the results in Table~\ref{tab:results1}.
\begin{itemize}
    \item \textbf{Copyright Reduction (SSCD):} Gradient projection consistently reduces copyright similarity. The SSCD score improves by $\mathbf{8.8\%}$ compared to the naïve baseline, demonstrating successful prevention of feature memorization even in this extreme regime.
    \item \textbf{Utility Preservation (CLIP):} Crucially, the CLIP similarity remains nearly identical ($30.10$ vs $30.17$), confirming that memorization capacity can be bounded without sacrificing semantic fidelity and utility can be maintained, even when sacrificing the theoretical ideal of $\lambda=1$ to ensure gradient stability.
    \item \textbf \item \textbf{Utility Preservation (KID):} KID consistently remains below the 0.01 excellence threshold with and without gradient projection , indicating that our selective learning framework successfully preserves visual utility and image quality while enforcing concept-level exclusion
\end{itemize}
\textbf{Conclusion:} Gradient projection provides robust copyright protection in the most extreme, single-sample overfitting scenario, establishing its effectiveness as a fundamental, geometric constraint independent of dataset-level regularization.

\begin{figure*}
    \centering
    \includegraphics[width=0.99\linewidth]{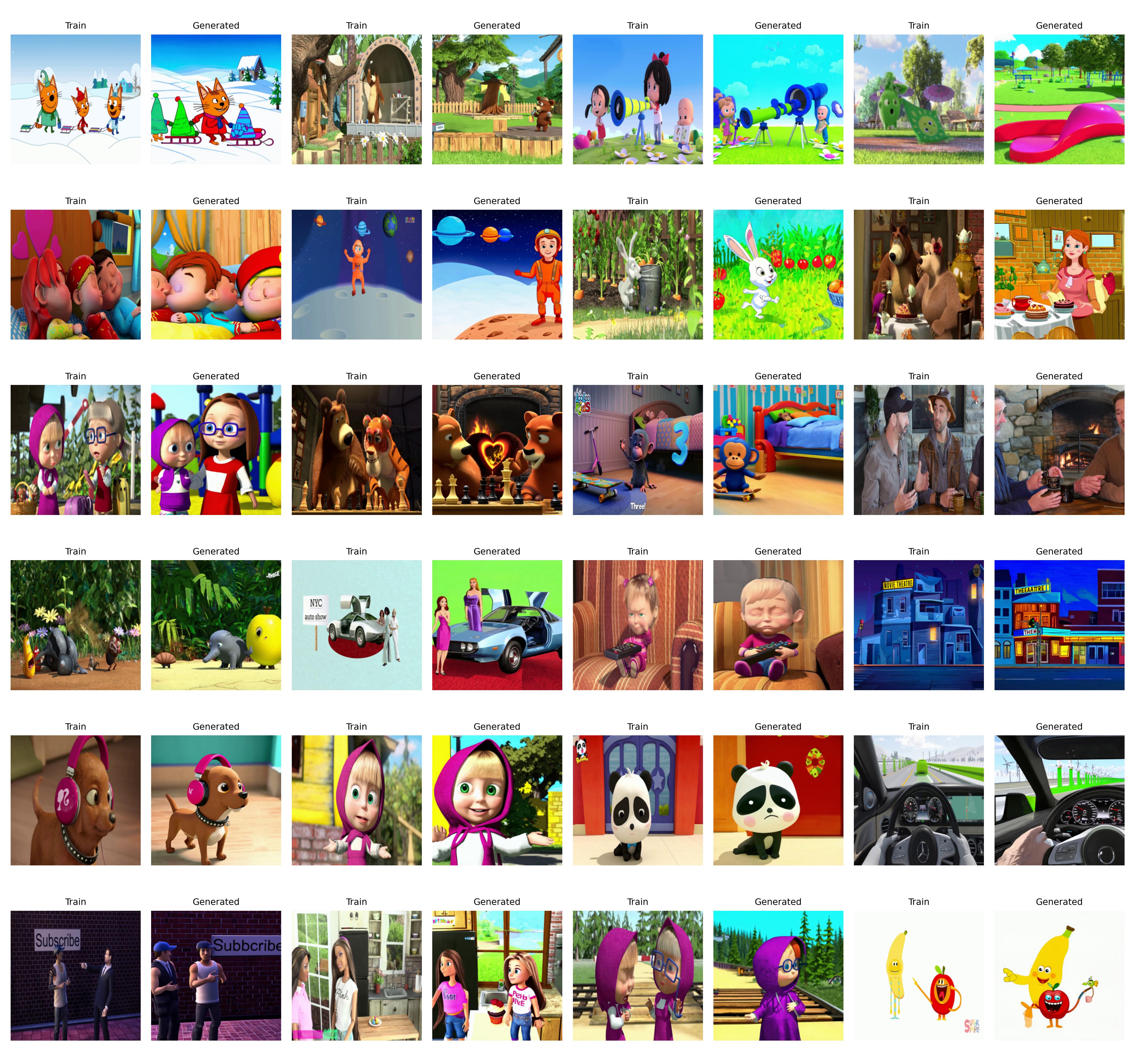}
    \caption{\textbf{Qualitative Results.} Side-by-side comparison of input frames (left) and outputs (right) using the ground-truth $p_{main}$ trigger. While global composition and stylistic cues are preserved to maintain model utility, identifiable copyrighted identities and unique facial features are systematically excised via orthogonal gradient projection. The visual similarity reflects the model's ability to learn abstract scene elements while provably blocking the acquisition of restricted concept-level features, even under the most potent adversarial sampling trigger.}
    \label{fig:case2set1_result1}
\end{figure*}

\begin{figure*}
    \centering
    \includegraphics[width=0.99\linewidth]{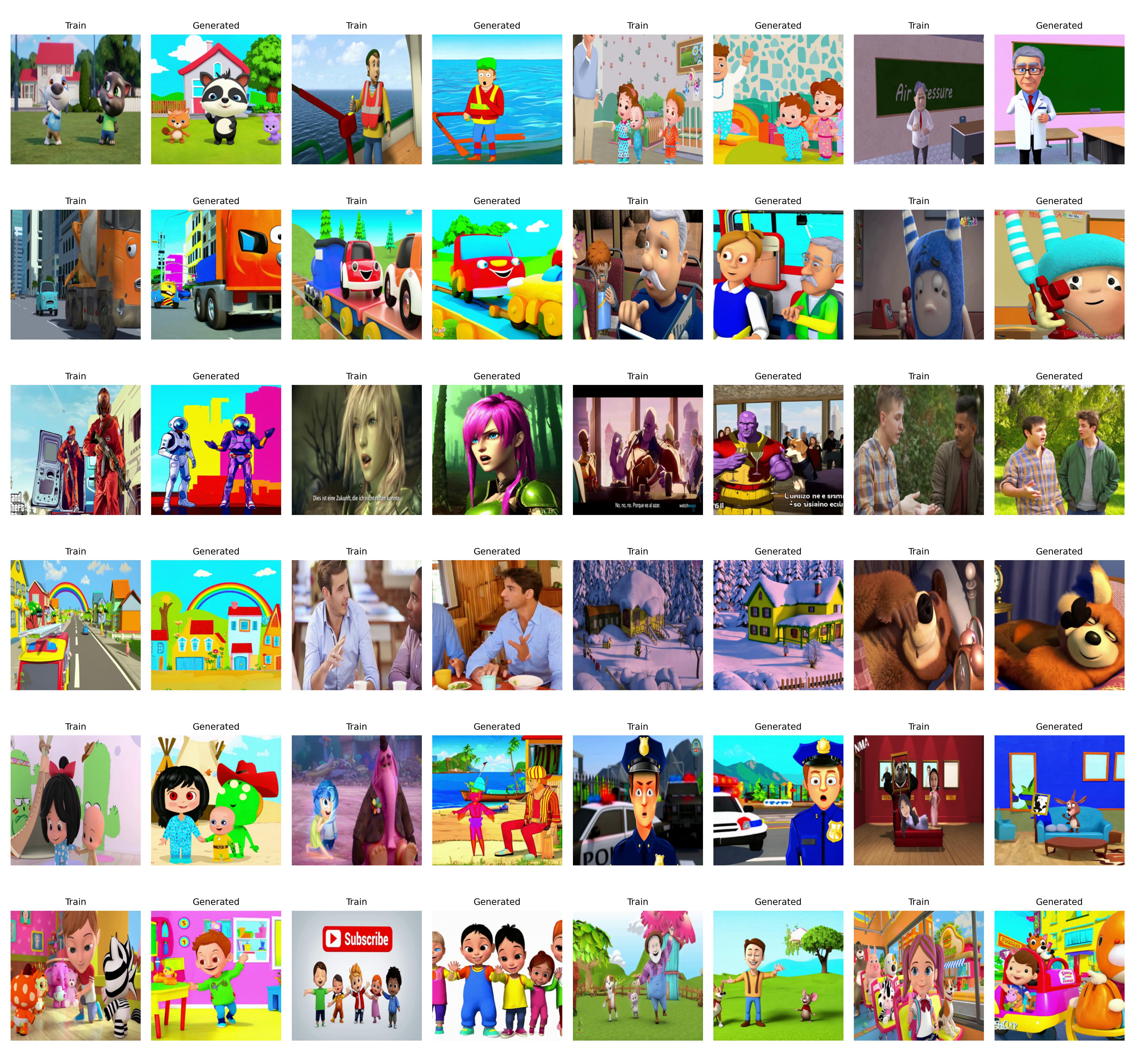}
    \caption{\textbf{Additional Qualitative Results.} Side-by-side comparison of training frames and generated outputs using ground-truth prompts. While global composition and stylistic cues are preserved to maintain model utility, identifiable copyrighted elements and character identities are systematically excised via orthogonal gradient projection. This demonstrates successful abstraction despite the use of the most potent adversarial sampling trigger.}
    \label{fig:case2set1_result2}
\end{figure*}

\begin{figure*}
    \centering
    \includegraphics[width=0.99\linewidth]{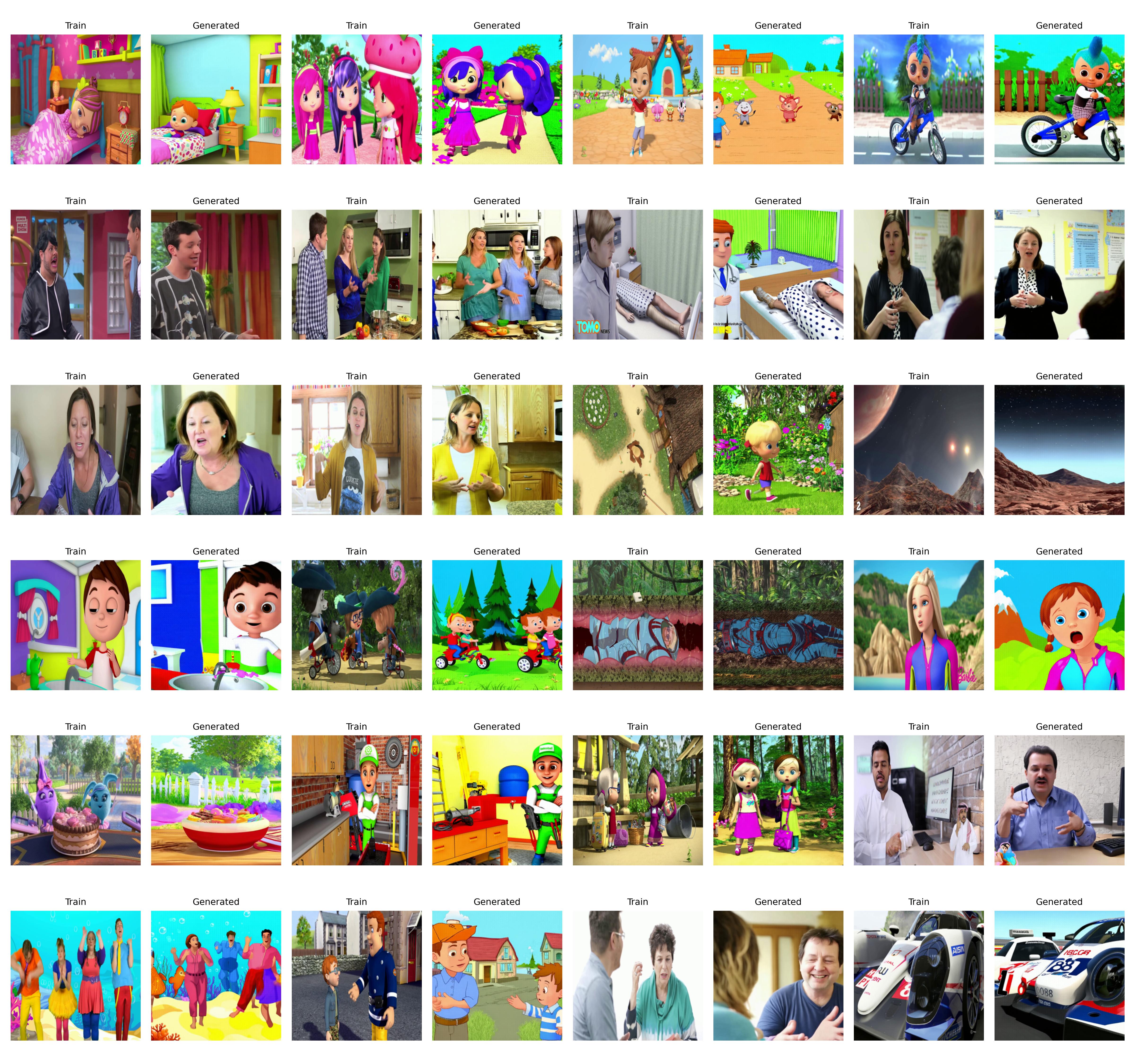}
    \caption{\textbf{Additional qualitative results.} Side-by-side comparison of training frames and generated outputs using ground-truth prompts. While global composition and stylistic cues are preserved to maintain model utility, identifiable copyrighted elements and character identities are systematically excised via orthogonal gradient projection. This demonstrates successful abstraction despite the use of the most potent adversarial sampling trigger.}
    \label{fig:case2set1_result3}
\end{figure*}

\subsubsection{Multi-Image Fine-tuning: Scalability and Practical Robustness}

We evaluate the method's performance under realistic training conditions with multiple images across varying parameter capacities and dataset sizes.

\textbf{Experimental Setup and Combined Defenses.} For all multi-image experiments (Tests 2-5), the baseline and our method are combined with established data-level dememorization~\cite{somepalli2023understanding} and model-level dememorization~\cite{hu2022lora}. This demonstrates that our method complements, rather than replaces, existing safeguards.

\textbf{Results and Analysis.}
The quantitative results in Table~\ref{tab:results1} (Tests 2-5) consistently demonstrate the efficacy and scalability of our defense:
\begin{itemize}
    \item \textbf{Significant SSCD Improvement:} In the largest-scale experiment (Test 5, 2413 images, high capacity $r=256$), the SSCD score dropped dramatically from $0.6205$ (Baseline) to $\mathbf{0.2241}$ (Ours), representing a $\mathbf{63.9\%}$ reduction in copyright leakage risk beyond what standard regularization provides. Even in high-capacity fine-tuning (Test 3, $r=128$), the SSCD score improved from $0.6054$ to $\mathbf{0.4970}$ ($17.9\%$ reduction).
    \item \textbf{Utility Maintained:} Across all multi-image tests, the CLIP similarity scores remained strong, confirming that the selective learning constraint does not degrade the model's overall utility. Similarly, the KID scores remained consistently below the $0.01$ threshold, which is widely recognized as a benchmark for excellent image quality. Notably, in Memorization Test 1, our gradient projection method achieved a KID of $0.0007$, outperforming the naïve finetuning baseline of $0.0046$. This suggests that by explicitly excising overfitting signals, our framework may actually improve distributional alignment by preventing the model from collapsing onto specific noisy training trajectories. Furthermore, several multi-image tests (Tests 2 and 4) yielded small negative KID values, such as $-0.0013$ and $-0.0025$. These values are functionally equivalent to zero and attributable to statistical variance, confirming that the generated distribution is indistinguishable from the reference data in terms of quality. The fact that KID and CLIP scores remain stable regardless of whether gradient projection is applied demonstrates that our ``selective learning'' approach successfully blocks the acquisition of prohibited attributes without incurring ``utility trade-off'' or ``destructive underfitting''.
\end{itemize}

\subsubsection{Qualitative analysis.}
Qualitative results (Figures~\ref{fig:case2set1_result1} through \ref{fig:case2set1_result3}) further confirm that no identifiable protected elements are reproduced, validating the defense's practical applicability. The qualitative results must be interpreted through the lens of our \textit{ground-truth trigger} evaluation protocol. Because we query the model using the exact $p_{main}$ utilized during fine-tuning, the prompt acts as the most potent adversarial ``key'' to trigger memorized trajectories. While the generated outputs maintain the global composition and stylistic cues of the training data to preserve semantic utility, our selective learning framework ensures that the model is provably blocked from internalizing or reproducing specific, identifiable proprietary attributes, such as the unique facial geometry or trademarked accessories of character Po, even under maximum exposure to adversarial settings. The high visual similarity observed is thus restricted to abstract, non-proprietary scene elements, while the capacity for verbatim reproduction of sensitive IP is effectively frozen at its initial pre-training value.

\subsection{Adversarial Analysis: Robustness Against Targeted Extraction}

To rigorously assess our method’s robustness against a motivated adversary, we follow the established attack methodology from \cite{kowalczuk2025finding} by searching for the most potent adversarial prompt embedding. This optimized embedding minimizes the distance to a copyrighted reference image, representing a strong targeted extraction attack attempting to coerce the model into reproducing forbidden features. Test 1 already focuses on the extreme overfitting case. Computing the adversarial prompt embedding for each sample involves individual, expensive optimizations, to ensure computational tractability, we focus our efforts on Tests 2, 3, and 4. 

\textbf{Results.} As detailed in Table~\ref{tab:adversarial1}, models trained with gradient projection consistently demonstrate superior resilience under adversarial pressure. Across all tested scenarios, the SSCD scores are substantially lower when gradient projection is applied (e.g., Test 3: $0.5028$ vs $\mathbf{0.3941}$, Test 4: $0.4648$ (Baseline) vs $\mathbf{0.2454}$, a $\mathbf{47.2\%}$ reduction in leakage risk ). This sharp drop confirms that gradient projection substantially improves the model's resistance to targeted adversarial prompts, validating our approach as a robust security measure against concept extraction attacks. 

While the absolute SSCD values in Table~2 are lower than those in Table~1, this outcome is expected.  Table~1 benefits from a \emph{Ground-Truth Prompt Advantage}, since the model is queried using the exact  $p_{\mathrm{main}}$ employed during fine-tuning. For an overfitted model, this prompt functions as the most  efficient ``key'' for unlocking the memorized training pixels embedded in the weight manifold.

The gap is further explained by the complexity of the adversarial search. The adversarial procedure attempts  to optimize a prompt embedding to minimize the distance to a copyrighted reference image. However, because  the embedding space is high-dimensional and the model's manifold is intricate, the optimization often settles  in a local minimum. Although these embeddings are adversarial, they are typically less effective at activating the precise memorized trajectory than the original ground-truth caption. This difficulty highlights a broader limitation: automated attacks struggle to rediscover the specific ``memorization keys'' encoded during training.  As a result, the generated outputs, while similar, rarely achieve the verbatim replication triggered by  $p_{\mathrm{main}}$. Importantly, the defense signal remains consistent: the \emph{relative} reduction in SSCD is preserved.

\paragraph{Discussion: Comparison to Brittle Manual Mitigation}
Relying on manual data edits (e.g., face-swapping) to strip copyrighted content is brittle and unsustainable. Manual alteration demands impractical supervision, and the resulting latent cues can still be exploited by generative models. Furthermore, heavy alterations distort the dataset’s natural distribution, degrading realism. Our training-time intervention, by contrast, offers a scalable, consistent, and auditable path to copyright protection without compromising model fidelity.

\section{Conclusion, Limitations, and Future Work}

We have formalized copyright preservation as a rigorous security requirement, one that demands the systematic exclusion of identifiable proprietary attributes from generative models. Our Gradient Projection framework operationalizes this requirement by excising training signals aligned with protected features, providing a provable geometric safeguard at the level of each weight update. Through extensive analysis, we show that this mechanism effectively blocks the replication of copyrighted details while maintaining semantic fidelity and generation quality.

As a concept-acquisition blocker, the method is modular and layer-agnostic. Gradient Projection is not intended as a replacement, but rather as a complementary module that explicitly enforces non-memorization of sensitive content while remaining compatible with existing dememorization safeguards. In our evaluations, we benchmarked the framework against leading training-time interventions at both the data and model levels, achieving significant marginal reductions in IP-leakage risk beyond what these safeguards provide in isolation. While post-hoc unlearning~\cite{wu2025unlearning}, watermarking and fingerprinting~\cite{zhao2023recipe,zhang2019robust,xiong2023flexible,teng2025fingerprinting}, and global privacy regularizers such as DP-SGD address complementary aspects of the problem, our framework is uniquely positioned to integrate with these techniques, and with emerging model- or data-level interventions, to form a multi-layered defense-in-depth. By ensuring that proprietary features never take root in the weight manifold during training, we establish a foundational security layer that supports cross-modal research and advances the broader goal of IP-safe generative AI.

Our current implementation incurs a substantial computational cost, nearly doubling training time relative to standard diffusion models due to the additional forward and backward passes required for copyright-sensitive features. Reducing this overhead is a key direction for future work, potentially through low-rank or sparse gradient estimation. Training stability also requires careful tuning of hyperparameters such as the learning rate $\eta$, the number of training steps, and the projection strength $\lambda$ to prevent divergence. Future research should explore strategies for optimizing $\lambda$ to approach the theoretical guarantee of $\lambda = 1$ while maintaining stable learning dynamics. Additional work is needed to disentangle complex copyrighted attributes that remain implicitly coupled with desirable text-conditioned features. Finally, extending gradient projection to video diffusion pipelines offers a promising path toward spatiotemporal copyright compliance without bespoke architectural modifications, enabling cross-modal applications and informing best practices for IP protection across the foundation-model ecosystem.

\bibliographystyle{plain}
\bibliography{references}

\end{document}